\documentclass[11pt]{article}
\usepackage[margin=1in]{geometry}
\usepackage{times}
\usepackage[round,authoryear]{natbib}
\usepackage{amsmath,amsthm,amssymb,mathtools}
\usepackage{microtype}
\usepackage{graphicx}
\usepackage{float}
\usepackage{booktabs,tabularx,array,multirow}
\usepackage[hidelinks]{hyperref}
\hypersetup{
  pdftitle={MECHVAR: Variance-Guided Mechanism Discrimination for Autonomous Machine Learning Experiment Selection},
  pdfauthor={Yifan Guo}
}
\usepackage{xcolor}
\usepackage{bm}
\usepackage{url}

\graphicspath{{assets/}}
\newtheorem{proposition}{Proposition}
\newtheorem{corollary}{Corollary}
\newtheorem*{remark}{Remark}
\newcommand{\Hset}{\mathcal{H}}
\newcommand{\Eset}{\mathcal{E}}
\newcommand{\MDV}{\mathrm{MDV}}
\newcommand{\BH}{\mathrm{BH}}
\newcommand{\KL}{D_{\mathrm{KL}}}

\title{MECHVAR: Variance-Guided Mechanism Discrimination\\
for Autonomous Machine Learning Experiment Selection}
\author{Yifan Guo\\
Stony Brook University}
\date{}

\begin{document}

\maketitle

\begin{abstract}
Benchmark gains are often mechanism-ambiguous: reproducing an improvement does not by itself identify why it occurs. We study finite-library mechanism discrimination, where posterior-weighted candidate mechanisms, executable probes, and a limited experimental budget define a sequential experiment-selection problem. MECHVAR selects the next probe by maximizing the posterior-weighted variance of its predicted responses. Under a shared-Gaussian predictive model, this score is exactly proportional to the classical Box--Hill posterior-weighted pairwise-KL criterion, yet it admits $O(KE)$ vectorized rescoring and a transparent additive audit over mechanism pairs. A local expansion further links the score to expected information gain (EIG) when predicted response separations are small. In a 25-block stress audit, MECHVAR outperforms confirmation-first in several moderate misspecification regimes, while its primary comparisons with EIG remain statistically unresolved. In a held-out Digits loop, normalized mechanism-identification AUC is 0.8975 for MECHVAR, 0.7825 for a score-greedy policy, and 0.9092 for EIG. At $K=100,E=200$, median single-thread full-library scoring is $10.36\,\mu$s for MECHVAR versus $57.69$ ms for six-node quadrature EIG in the recorded environment. MECHVAR therefore provides a lightweight, auditable acquisition rule for finite-library experiment selection when a shared predictive scale is a defensible approximation.
\end{abstract}

\section{Introduction}

A benchmark gain can have several explanations. A graph-aware model may exploit relational structure, benefit from a change in effective capacity, or use the graph as an identifier-like shortcut. A semi-supervised learner may improve through unlabeled data, augmentation, or an optimization side effect. Reproducing the gain alone does not distinguish among these mechanisms.

Autonomous ML workflows need to decide how to investigate such ambiguity. Research agents can modify code, launch training jobs, inspect metrics, and propose follow-up experiments \citep{huang2024mlagent,chan2025mle,lu2024aiscientist,yamada2025aiscientistv2,schmidgall2025agentlab}. Given a finite set of plausible mechanisms and executable interventions, the next task is to allocate a limited budget to experiments that distinguish explanations with substantial posterior support.

We formulate this task as sequential model discrimination. Mechanisms are candidate explanations for an observed effect; probes are ablations, perturbations, or controlled retraining runs. The objective is to separate the mechanisms that remain plausible under the posterior. Maximizing benchmark accuracy answers a different question. In the held-out Digits experiment (Section~\ref{sec:digits}), score-greedy uses the same posterior and calibration templates as the discrimination policies, yet identifies the hidden mechanism less efficiently.

MECHVAR scores a probe by the posterior-weighted variance of its predicted responses. The weighted-variance identity expresses this score as a sum of pairwise squared response differences. Under the belief model's shared predictive variance, the classical ordered-pair Gaussian Box--Hill criterion is exactly this variance multiplied by \(1/\sigma_b^2\) \citep{boxhill1967}. The reduction preserves the selected probe, makes the contribution of each mechanism pair explicit, and evaluates the complete library in \(O(KE)\) arithmetic.

The method is deliberately narrow: it is not a new divergence and it does not generate hypotheses or probes. Its role is to make a classical model-discrimination objective cheap to rescore and easy to audit inside an iterative ML workflow. We test that role under model mismatch, small budgets, larger mechanism libraries, and a held-out loop that executes one selected intervention before updating the posterior.

We make three contributions:
\begin{itemize}
\item We formalize follow-up ablations and interventions as a finite-library mechanism-identification problem, and we distinguish this objective empirically from maximizing the next experiment's benchmark score.
\item We show that, under shared Gaussian predictive variance, the Box--Hill posterior-weighted pairwise-KL objective reduces exactly to posterior-weighted response variance, yielding $O(KE)$ full-library rescoring, mechanism-pair attribution, and a local connection to EIG.
\item We evaluate the resulting acquisition rule under forecast and likelihood mismatch, budget and mechanism-set scaling, structured forecast errors, and disjoint held-out execution, with paired statistical inference, quadrature diagnostics, and a documented score-only runtime benchmark.
\end{itemize}

\begin{figure}[t]
\centering
\includegraphics[width=0.98\linewidth]{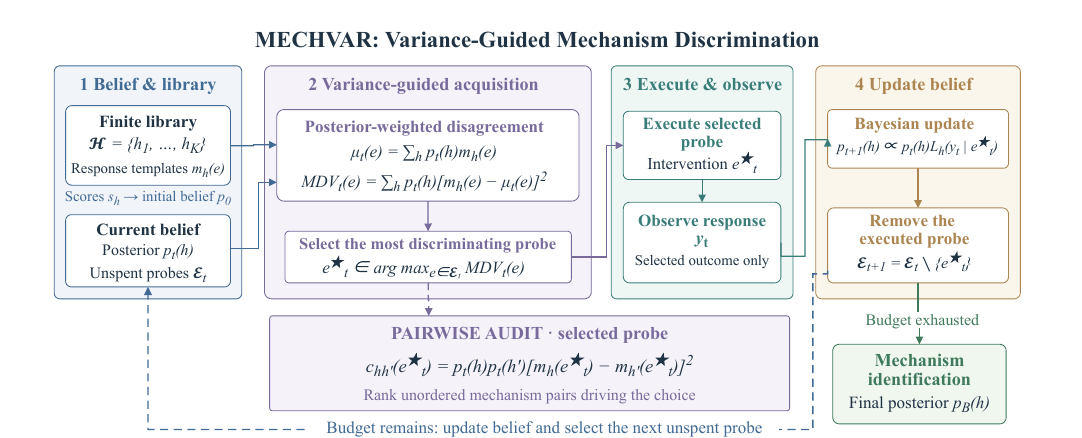}
\caption{Finite-library mechanism-discrimination loop. The posterior and response templates determine the MDV score; the pairwise decomposition records which mechanism pairs contribute most to the selected probe. Only the executed outcome is revealed before the posterior update.}
\label{fig:loop}
\end{figure}

\section{Related Work}

Bayesian experimental design chooses measurements or interventions to maximize expected utility under uncertainty \citep{lindley1956,chaloner1995}. Model discrimination uses this framework to separate competing explanations. The Box--Hill criterion combines posterior model probabilities with predictive separation \citep{boxhill1967}. Our contribution is not to replace that criterion: under shared-Gaussian predictive uncertainty, we expose an exact variance form of the posterior-weighted pairwise-KL objective. The form matters operationally because it supports vectorized rescoring of a finite probe library and an additive attribution to the mechanism pairs driving each choice.

Information gain is another common objective for query selection. BALD uses mutual information between predictions and model uncertainty \citep{houlsby2011}. In our finite-mechanism setting, EIG is $I(H;Y\mid e)$ and serves as the information-theoretic comparator. Computational approaches to sequential design include Deep Adaptive Design, which learns a policy mapping experiment history to the next design \citep{foster2021dad}; \citet{rainforth2024modern} review variational, amortized, and policy-based methods. MECHVAR assumes that mechanisms and probes have already been enumerated. It rescans this library as the posterior changes and records why each probe is selected. Our EIG implementation uses deterministic Gauss--Hermite quadrature; Appendix~\ref{app:numerical-diagnostics} compares the six-node rule with higher orders.

Research agents automate several stages surrounding experiment selection. MLAgentBench and MLE-bench evaluate agents that modify code, train models, and assess results \citep{huang2024mlagent,chan2025mle}; ScienceAgentBench covers scientific data-analysis tasks \citep{chen2025scienceagent}. Other systems combine hypothesis generation, experimentation, and iterative refinement \citep{lu2024aiscientist,yamada2025aiscientistv2,schmidgall2025agentlab,gottweis2025cointist}, while FIRE-Bench evaluates executable rediscovery of scientific findings \citep{wang2026fire}. MECHVAR addresses the selection step once candidate mechanisms and probes are available. It does not generate new hypotheses or experiments.

\section{Mechanism-Discriminating Experiment Planning}
\label{sec:method}

\subsection{Finite-Hypothesis Setting}
\label{sec:finite-setting}

Let $\Hset=\{h_1,\ldots,h_K\}$ denote a finite set of candidate mechanisms and
$\Eset=\{e_1,\ldots,e_E\}$ a finite library of candidate probes. Each mechanism
$h$ predicts a scalar response $m_h(e)$ under probe $e$. Observational evidence
is summarized by mechanism scores $s_h$, which are converted into an initial
distribution
\begin{equation}
p_0(h)
\propto
\exp\!\left[
-\frac{s_h-\min_j s_j}{\tau_s}
\right],
\label{eq:initial-prior}
\end{equation}
where $\tau_s$ sets the influence of the observational ranking on the prior. The stress audit allows an incorrect decoy mechanism to appear favorable in these scores, so the initial ranking need not identify the true mechanism.

After selecting probe $e_t$ and observing response $y_t$, the planner updates
its posterior according to
\begin{equation}
y_t\mid h,e_t
\sim
\mathcal{N}\!\left(m_h(e_t),\sigma_b^2\right),
\qquad
p_{t+1}(h)
\propto
p_t(h)
\exp\!\left[
-\frac{(y_t-m_h(e_t))^2}{2\sigma_b^2}
\right].
\label{eq:posterior-update}
\end{equation}
The experimental budget restricts execution to $B<E$ probes.

\subsection{MECHVAR and the Box--Hill Equivalence}
\label{sec:mechvar}

Define the posterior-weighted predictive mean and the corresponding
mechanism-disagreement variance as
\begin{align}
\mu_t(e)
&=
\sum_h p_t(h)m_h(e),
\label{eq:predictive-mean}
\\
\MDV_t(e)
&=
\sum_h
p_t(h)
\bigl(m_h(e)-\mu_t(e)\bigr)^2.
\label{eq:mdv}
\end{align}
MECHVAR selects
\begin{equation}
e_t^{\textsc{MechVar}}
\in
\arg\max_{e\in\Eset_t}
\MDV_t(e).
\label{eq:mechvar-selection}
\end{equation}

\begin{proposition}[Shared-Gaussian equivalence]
\label{prop:shared-gaussian}
Assume
\[
Y\mid h,e
\sim
\mathcal{N}\!\left(m_h(e),\sigma_b^2\right),
\]
where the predictive variance $\sigma_b^2>0$ is shared across mechanisms and
probes. Then
\begin{equation}
\MDV_t(e)
=
\sum_{h<h'}
p_t(h)p_t(h')
\bigl(m_h(e)-m_{h'}(e)\bigr)^2,
\label{eq:mdv-pairwise}
\end{equation}
and the posterior-weighted ordered-pair KL criterion
\begin{equation}
\BH_t(e)
=
\sum_{h,h'}
p_t(h)p_t(h')
\KL\!\left(
\mathcal{N}(m_h(e),\sigma_b^2)
\,\|\,
\mathcal{N}(m_{h'}(e),\sigma_b^2)
\right)
\label{eq:box-hill}
\end{equation}
satisfies
\begin{equation}
\BH_t(e)
=
\frac{\MDV_t(e)}{\sigma_b^2}.
\label{eq:bh-mdv-equivalence}
\end{equation}
Consequently, MDV and the Box--Hill criterion induce the same argmax set over
candidate probes.
\end{proposition}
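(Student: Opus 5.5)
The proof has three short steps: the pairwise identity \eqref{eq:mdv-pairwise}, then the KL computation giving \eqref{eq:bh-mdv-equivalence}, then the argmax claim. Fix $t$ and $e$, and write $p_h=p_t(h)$, $m_h=m_h(e)$ and $\mu=\mu_t(e)$. Since $p_t$ is a probability distribution, $\sum_h p_h=1$.

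\textbf{Step 1: the pairwise identity.} Consider the full ordered double sum $S=\sum_{h,h'}p_hp_{h'}(m_h-m_{h'})^2$ and expand the square:
\[
S=\sum_{h,h'}p_hp_{h'}\bigl(m_h^2-2m_hm_{h'}+m_{h'}^2\bigr).
\]
The two squared terms each contribute $\sum_h p_h m_h^2$, because the sum over the other index equals one. The cross term contributes $-2\mu^2$. Hence
\[
S=2\Bigl(\sum_h p_hm_h^2-\mu^2\Bigr)=2\,\MDV_t(e),
\]
where the last equality is the usual rewriting of \eqref{eq:mdv} as second moment minus squared mean. The summand of $S$ is symmetric in $(h,h')$ and vanishes on the diagonal $h=h'$. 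Therefore $S$ equals twice the sum over unordered pairs $h<h'$, and \eqref{eq:mdv-pairwise} follows.

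\textbf{Step 2: the KL computation.} For two Gaussians with a common variance, the standard closed form gives
\[
\KL\!\left(\mathcal{N}(a,\sigma_b^2)\,\|\,\mathcal{N}(b,\sigma_b^2)\right)=\frac{(a-b)^2}{2\sigma_b^2}.
\]
The log-variance ratio and trace terms cancel precisely because the variance is shared. I would verify this in one line: take the expectation under the first law of the log-density ratio. That ratio is $\bigl[(y-b)^2-(y-a)^2\bigr]/(2\sigma_b^2)$, and its expectation is $(a-b)^2/(2\sigma_b^2)$.

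Substituting into \eqref{eq:box-hill} gives $\BH_t(e)=S/(2\sigma_b^2)$. Combined with Step 1, this yields $\BH_t(e)=\MDV_t(e)/\sigma_b^2$, which is \eqref{eq:bh-mdv-equivalence}.

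\textbf{Step 3: the argmax claim.} The factor $1/\sigma_b^2$ is a positive constant, independent of $e$, because $\sigma_b^2$ is assumed shared across probes. A strictly increasing map preserves the argmax set over $\Eset_t$, so MDV and Box--Hill select the same probes.

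\textbf{Where care is needed.} No step is genuinely hard; the main risk is bookkeeping. The factor of $2$ from ordered versus unordered pairs must cancel exactly against the $1/2$ in the Gaussian KL. The assumption that the variance is shared across probes, not just across mechanisms, is what makes the argmax claim valid; otherwise the scaling would vary with $e$.
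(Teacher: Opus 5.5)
Your proof is correct and follows essentially the same route as the paper: the weighted-variance symmetrization (you expand the ordered double sum to $2\,\MDV_t(e)$, while the paper runs the same algebra starting from $\MDV_t(e)$), then the equal-variance Gaussian KL formula $(m_h-m_{h'})^2/(2\sigma_b^2)$, then invariance of the argmax under the positive, $e$-independent factor $1/\sigma_b^2$. Your cancellation of the ordered-versus-unordered factor of $2$ against the $1/2$ in the KL matches the paper's proof.
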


\begin{proof}
Fix a probe $e$ and abbreviate $p_h=p_t(h)$, $m_h=m_h(e)$, and
$\mu=\sum_h p_h m_h$. The key step is the standard symmetrized form of a
weighted variance:
\begin{align}
\MDV_t(e)
&= \sum_h p_h(m_h-\mu)^2 \\
&= \sum_h p_h m_h^2-\mu^2 \\
&= \sum_h p_h m_h^2-\sum_{h,h'} p_h p_{h'} m_h m_{h'} \\
&= \frac12\sum_{h,h'} p_h p_{h'}(m_h-m_{h'})^2 \\
&= \sum_{h<h'} p_h p_{h'}(m_h-m_{h'})^2.
\end{align}
This proves Equation~\eqref{eq:mdv-pairwise}. Now use the equal-variance
Gaussian KL identity
\begin{equation}
\KL\!\left(\mathcal{N}(m_h,\sigma_b^2)\,\|\,\mathcal{N}(m_{h'},\sigma_b^2)\right)
= \frac{(m_h-m_{h'})^2}{2\sigma_b^2}.
\end{equation}
Substituting into the ordered-pair Box--Hill criterion gives
\begin{equation}
\BH_t(e)
= \frac{1}{2\sigma_b^2}\sum_{h,h'} p_h p_{h'}(m_h-m_{h'})^2
= \frac{\MDV_t(e)}{\sigma_b^2}.
\end{equation}
Since the factor $1/\sigma_b^2$ is strictly positive and does not depend on
$e$, MDV and Box--Hill induce the same maximizing probe set.
\end{proof}

Equation~\eqref{eq:mdv-pairwise} further yields the pairwise attribution
\begin{equation}
c_{hh'}(e)
=
p_t(h)p_t(h')
\bigl(m_h(e)-m_{h'}(e)\bigr)^2,
\qquad h<h',
\label{eq:pairwise-attribution}
\end{equation}
with
\begin{equation}
\MDV_t(e)
=
\sum_{h<h'}
c_{hh'}(e).
\label{eq:pairwise-sum}
\end{equation}
For a selected probe, ranking $c_{hh'}(e)$ identifies the mechanism pairs that contribute most to the score, accounting for both posterior weight and predicted response separation.

\begin{corollary}[Vectorized scoring cost]
\label{cor:vectorized-cost}
Let $R\in\mathbb{R}^{K\times E}$ collect the response templates, with
$R_{he}=m_h(e)$, and let $p\in\mathbb{R}^K$ denote the current posterior. The
scores of all $E$ probes can be computed jointly as
\begin{equation}
\bm{\mu}
=
p^\top R,
\qquad
\bm{s}
=
p^\top(R\odot R)
-
\bm{\mu}\odot\bm{\mu},
\label{eq:vectorized-score}
\end{equation}
in $O(KE)$ arithmetic. The $O(K^2)$ pairwise expansion is needed only when an
attribution is requested for a selected probe.
\end{corollary}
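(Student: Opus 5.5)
The plan is to apply, column by column, the algebraic identity already used in the proof of Proposition~\ref{prop:shared-gaussian}, and then count operations. Fix the posterior vector $p\in\mathbb{R}^K$ and the template matrix $R$. For a single probe $e$, the second line of that proof gives
\[
\MDV_t(e)=\sum_h p_h m_h(e)^2-\mu_t(e)^2 .
\]
Reading this across all columns at once, the $e$-th entry of $p^\top R$ is $\sum_h p_h R_{he}=\mu_t(e)$, matching Equation~\eqref{eq:predictive-mean}. The $e$-th entry of $p^\top(R\odot R)$ is $\sum_h p_h R_{he}^2$. Hence the $e$-th entry of $\bm{s}$ in Equation~\eqref{eq:vectorized-score} equals $\MDV_t(e)$. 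This establishes correctness of the joint formula for every probe simultaneously.

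For the cost, I will count arithmetic operations in three steps:
\begin{itemize}
\item forming $R\odot R$ takes $KE$ multiplications;
\item each of the two vector--matrix products $p^\top R$ and $p^\top(R\odot R)$ takes $KE$ multiplications and $(K-1)E$ additions;
\item the final $\bm{\mu}\odot\bm{\mu}$ and the subtraction take $2E$ operations.
\end{itemize}
The total is $O(KE)$. I will also note that if $R\odot R$ is cached, because the templates do not change across posterior updates, each rescoring is two matrix--vector products, which is still $O(KE)$. Restricting to the remaining library $\Eset_t\subseteq\Eset$ only drops columns, so the bound still holds. The argmax in Equation~\eqref{eq:mechvar-selection} then costs $O(E)$ more.

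For the attribution remark, I will observe that Equation~\eqref{eq:pairwise-attribution} has $\binom{K}{2}$ terms per probe, each computed in $O(1)$. So the full decomposition for one selected probe costs $O(K^2)$, and ranking the terms adds $O(K^2\log K)$ if a sorted list is wanted. Since this is requested only for the chosen probe, not for all $E$, it does not enter the per-round scoring cost.

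There is no real obstacle in this argument. The only point needing care is to stress that the identity $\sum_h p_h m_h^2-\mu^2$ is what lets us avoid the $O(K^2E)$ cost of evaluating the pairwise form in Equation~\eqref{eq:mdv-pairwise} or the Box--Hill sum in Equation~\eqref{eq:box-hill} directly. One may add a remark that the uncentered form can lose precision in floating point when responses have a large common offset, and that centering $R$ per column beforehand fixes this without changing the asymptotic cost. The cost claim itself needs no such remark.
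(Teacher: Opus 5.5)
Your proposal is correct and follows the paper's route: the paper gives no separate proof, and the corollary comes from reading the uncentered identity $\MDV_t(e)=\sum_h p_h m_h(e)^2-\mu_t(e)^2$ in the proof of Proposition~\ref{prop:shared-gaussian} column by column, counting $O(KE)$ operations, and reserving the $O(K^2)$ pairwise sum for the selected probe. Your extra remarks on caching $R\odot R$, on the $O(K^2\log K)$ cost of sorting the pair contributions, and on floating-point cancellation in the uncentered form are sound, but the claim does not need them.
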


\begin{proposition}[Local EIG agreement]
\label{prop:eig-local}
Fix a probe $e$ and write $a_h(e)=m_h(e)-\mu_t(e)$, so that
$\sum_h p_t(h)a_h(e)=0$. Consider the locally scaled predictive family
\[
Y_\lambda\mid h,e\sim
\mathcal{N}\!\left(\mu_t(e)+\lambda a_h(e),\sigma_b^2\right).
\]
With mutual information measured in nats, as $\lambda\to0$,
\begin{equation}
I(H;Y_\lambda\mid e)
=
\frac{\lambda^2}{2\sigma_b^2}\,\MDV_t(e)
+O(\lambda^4).
\label{eq:eig-local}
\end{equation}
Thus, for any two probes with a nonzero MDV gap, sufficiently small response
separations induce the same pairwise ordering under EIG and MECHVAR. The proof,
based on the Gaussian-channel I--MMSE identity \citep{guo2005immse}, is given in
Appendix~\ref{app:theory}. The statement is local: higher-order terms can change the ordering once response separations are no longer small compared with $\sigma_b$.
\end{proposition}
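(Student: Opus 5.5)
We prove Proposition~\ref{prop:eig-local} by treating the mixture channel as a Gaussian channel with a scalar input and applying the I--MMSE identity \citep{guo2005immse}. Fix $e$ and let $X=a_H(e)$ with $H\sim p_t$. Then $\mathbb{E}[X]=0$ and $\mathbb{E}[X^2]=\MDV_t(e)$. Because $Y_\lambda-\mu_t(e)=\lambda X+\sigma_b Z$ with $Z\sim\mathcal{N}(0,1)$ independent of $H$, and $Y_\lambda$ depends on $H$ only through $X$, we have $I(H;Y_\lambda\mid e)=I(X;\sqrt{\mathrm{snr}}\,X+Z)$ with $\mathrm{snr}=\lambda^2/\sigma_b^2$. (If distinct mechanisms share the same $a_h$, then $H\to X\to Y$ is a Markov chain and $I(H;Y)=I(X;Y)$, so nothing is lost.)

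The I--MMSE identity gives
\[
\frac{d}{d\,\mathrm{snr}}\,I(X;\sqrt{\mathrm{snr}}\,X+Z)=\tfrac12\,\mathrm{mmse}(\mathrm{snr}).
\]
Integrating from $0$ then reduces the claim to the small-SNR expansion
\[
\mathrm{mmse}(\mathrm{snr})=\mathrm{Var}(X)+O(\mathrm{snr}).
\]
Once we have this,
\[
I=\tfrac12\,\mathrm{snr}\,\mathrm{Var}(X)+O(\mathrm{snr}^2)=\frac{\lambda^2}{2\sigma_b^2}\,\MDV_t(e)+O(\lambda^4),
\]
which is exactly Equation~\eqref{eq:eig-local}.

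\textbf{Main step: the MMSE bound.} This is the only nontrivial point. Since $X$ is finitely supported and bounded by $A=\max_h|a_h(e)|$, we can bound $\mathrm{Var}(X)-\mathrm{mmse}(\mathrm{snr})=\mathrm{Var}(\mathbb{E}[X\mid Y])$ directly. We write
\[
\mathbb{E}[X\mid Y=y]=\frac{\sum_h p_h a_h e^{\sqrt{s}a_h y-s a_h^2/2}}{\sum_h p_h e^{\sqrt{s}a_h y-s a_h^2/2}}
\]
and Taylor-expand in $\sqrt{s}$. The zeroth-order term is $\mathbb{E}[X]=0$, and the first-order term is $\sqrt{s}\,\mathrm{Var}(X)\,y$. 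This gives $\mathbb{E}[X\mid Y]^2\le C s(1+y^2)e^{2\sqrt{s}A|y|}$, uniformly for $s\le1$, and its expectation under $Y$ is $O(s)$ because $Y$ has Gaussian tails. The remainder terms grow at most like $e^{c\sqrt{s}|y|}$ times a polynomial in $y$, so dominated convergence controls them. An alternative that avoids I--MMSE is to expand
\[
I=\mathbb{E}\Bigl[\log\frac{p(Y\mid X)}{p(Y)}\Bigr]
\]
directly: apply $\log\mathbb{E}_X[e^{\sqrt{s}Xz-sX^2/2}]$ as a cumulant expansion in $\sqrt{s}$. The odd-order terms vanish in expectation, and the first surviving term is $s\,\mathrm{Var}(X)/2$. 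This route also shows the error is genuinely $O(s^2)=O(\lambda^4)$. The step requiring the most care in either route is the uniformity of the Taylor remainder, which is why we use the boundedness of $X$.

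\textbf{Ordering consequence.} For probes $e,e'$ with $\MDV_t(e)>\MDV_t(e')$, subtract the two expansions:
\[
I_e-I_{e'}=\frac{\lambda^2}{2\sigma_b^2}\bigl(\MDV_t(e)-\MDV_t(e')\bigr)+O(\lambda^4).
\]
The leading term is positive and of order $\lambda^2$, so it dominates the $O(\lambda^4)$ remainder for all sufficiently small $\lambda$. The remainder constants depend only on the finite set of $a_h(e)$, $a_h(e')$ and $\sigma_b$. Hence EIG and MECHVAR order the two probes the same way for all sufficiently small $\lambda$.
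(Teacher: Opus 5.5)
Your proposal is correct and follows the paper's proof essentially step for step. You use the same reduction of $I(H;Y_\lambda\mid e)$ to the scalar channel $\sqrt{\gamma}X+Z$ with $\gamma=\lambda^2/\sigma_b^2$, then the I--MMSE identity, the expansion $\operatorname{mmse}=\MDV_t(e)+O(\gamma)$, integration from $0$, and the same leading-term argument for the ordering. The only difference is that you justify that expansion explicitly, by bounding $\operatorname{Var}(X)-\operatorname{mmse}=\mathbb{E}\bigl[\mathbb{E}[X\mid Y]^2\bigr]=O(\gamma)$ through a Taylor bound on the conditional mean; the paper simply appeals to smoothness of the MMSE for finitely supported inputs.
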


The worked example in Appendix~\ref{app:audit-example} illustrates this attribution: one posterior-plausible pair accounts for 79.9\% of the selected probe's MDV score.

\begin{remark}
The equivalence requires predictive variance to be shared across mechanisms and
probes. Under heteroscedastic predictive uncertainty, KL-based and
variance-based acquisition can rank probes differently. The
prediction-misspecification experiments in
Section~\ref{sec:robustness} perturb the response-template matrix available to
the planner while retaining a shared $\sigma_b$. Conditional on any realized
perturbed matrix, Proposition~\ref{prop:shared-gaussian} therefore continues to
hold internally: misspecification changes how accurately the planner's
templates represent the hidden task, but not the algebraic relationship between
MDV and the Box--Hill score under the planner's assumed belief model.
\end{remark}

\section{Experimental Evaluation}
\label{sec:experiments}

\subsection{Baselines and Statistical Protocol}
\label{sec:baselines}

Random selects uniformly from the remaining probes. For confirmation-first,
let $\hat h_t\in\arg\max_h p_t(h)$ and renormalize the remaining posterior as
$\tilde p_t(h)=p_t(h)/(1-p_t(\hat h_t))$ for $h\neq\hat h_t$. Its acquisition
score is
\begin{equation}
C_t(e)=\left[m_{\hat h_t}(e)-\sum_{h\neq\hat h_t}\tilde p_t(h)m_h(e)\right]^2,
\end{equation}
and it selects the remaining probe with maximum $C_t(e)$. EIG maximizes
$I(H;Y\mid e)$ under the same Gaussian belief model and is evaluated using
deterministic six-node Gauss--Hermite quadrature. The held-out Digits benchmark
additionally includes score-greedy,
\begin{equation}
G_t(e)=\sum_h p_t(h)q_h(e),
\end{equation}
where $q_h(e)$ is the calibration-template accuracy for mechanism $h$ under
intervention $e$.

The primary stress audit defines a nine-condition family for each comparator: five prediction-misspecification levels and four non-duplicate likelihood-misspecification levels. The paired block, not an individual hidden task, is the inferential unit. We apply paired Wilcoxon tests with Holm adjustment within each family \citep{holm1979}, supplemented by exact sign tests after discarding zero differences in near-ceiling settings. Budget, mechanism-count, and structured-error sweeps are exploratory. For Digits, each held-out seed forms one paired block; the four AUC comparisons involving MECHVAR constitute a separate Holm family.

\subsection{Motivating Diagnostics}
\label{sec:diagnostics}

The motivating studies show how similar benchmark observations can support different mechanism-level interpretations. A ColoredMNIST-style color swap raises OOD accuracy by $0.6028$ (95\% CI $[0.5588,0.6469]$); a 50-node NOTEARS study changes from mean SHD $79.7$ to $160.0$ under permutation; and a global class permutation in CIFAR-10$\rightarrow$SVHN collapses classification accuracy while leaving maximum-softmax AUROC effectively unchanged. These studies motivate the discrimination objective but do not evaluate the planner. Figure~\ref{fig:diagnostics} and the full six-study summary appear in Appendix~\ref{app:diagnostics}.

\subsection{Robustness Under Misspecification}
\label{sec:robustness}

The audit contains 25 paired blocks of 50 hidden tasks, for 1,250 tasks per condition. Each task has five candidate mechanisms, ten probes, and a budget of four. Three probes have reduced separation between mechanisms. Observational scores are noisy, and an incorrect decoy receives an apparent advantage with probability $0.58$. Policies share the underlying tasks, prediction matrices, realized observations, and budgets.

The planner assumes $\sigma_b=0.22$. Prediction misspecification perturbs each
mechanism--probe response forecast before selection, with
\[
\sigma_{\mathrm{pred}}
\in
\{0,0.10,0.25,0.50,1.00\}.
\]
Likelihood misspecification instead varies the true-to-assumed noise ratio as
\[
\frac{\sigma_{\mathrm{true}}}{\sigma_b}
\in
\{0.5,1.5,2,3\},
\]
while keeping the planner's update rule fixed at $\sigma_b$. Latent block seeds are shared across policies wherever applicable.

Accuracy is near ceiling in the well-specified condition: random reaches 95.92\%, confirmation-first 99.68\%, and MECHVAR and EIG both reach 100\%. MECHVAR and confirmation-first differ in only four of the 25 paired blocks. The exact sign-test value is $p=0.125$, and the Holm-adjusted Wilcoxon value is $0.228$. With so few nonzero differences, this condition has limited power to detect small policy differences and cannot establish equivalence.

At $\sigma_{\mathrm{pred}}=0.5$, mechanism-identification accuracy is 94.32\% for MECHVAR, 91.12\% for confirmation-first, 94.08\% for EIG, and 75.28\% for random. MECHVAR's 3.20-point advantage over confirmation-first has a 95\% CI of $[1.81,4.59]$ (raw Wilcoxon $p=2.18\times10^{-4}$; Holm-adjusted $p=0.00196$; exact sign-test $p=0.00434$). Its 0.24-point difference from EIG remains statistically unresolved (raw $p=0.714$; Holm-adjusted $p=1.0$).

When the true likelihood noise is tripled, MECHVAR reaches 94.72\% accuracy and confirmation-first reaches 92.08\%. The 2.64-point difference has a 95\% CI of $[1.26,4.02]$ (raw Wilcoxon $p=9.99\times10^{-4}$; Holm-adjusted $p=0.00800$; exact sign-test $p=0.00149$). EIG reaches 94.24\%, leaving an unresolved 0.48-point difference from MECHVAR (raw $p=0.154$; Holm-adjusted $p=1.0$). Table~\ref{tab:stress} summarizes these comparisons.

\begin{figure}[t]
\centering
\includegraphics[width=0.98\linewidth]{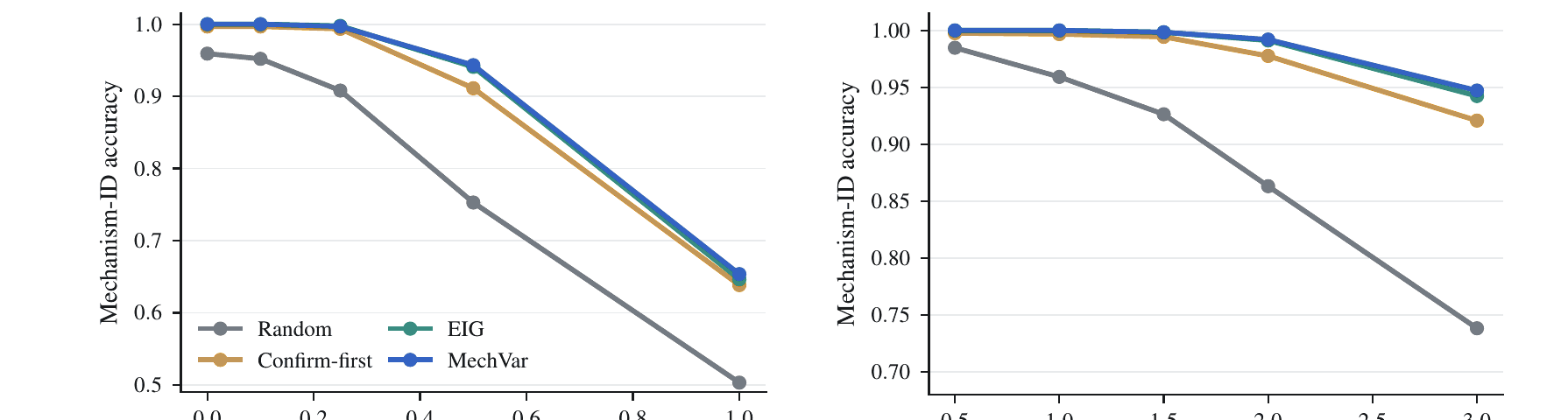}
\caption{Full misspecification sweeps. Left: prediction misspecification.
Right: likelihood-noise mismatch. The clearest resolved differences from
confirmation-first occur in the moderate misspecification regimes highlighted
in the text, while the primary MECHVAR--EIG comparisons remain statistically
unresolved.}
\label{fig:misspec}
\end{figure}

\begin{table}[t]
\centering
\caption{Selected stress-audit conditions. Accuracy columns report
mechanism-identification percentages. Parentheses show family-wise
Holm-adjusted Wilcoxon $p$-values for the indicated comparator.}
\label{tab:stress}
\footnotesize
\setlength{\tabcolsep}{3.6pt}
\begin{tabular}{lrrrrrr}
\toprule
Condition & Random & Confirm-first & EIG & MECHVAR
& $\Delta_{M-C}$ (Holm $p$)
& $\Delta_{M-E}$ (Holm $p$)\\
\midrule
Well specified
& 95.92 & 99.68 & 100.00 & 100.00
& $+0.32$ $(0.228)^{\dagger}$ & $0.00$ (--)\\
Pred. $\sigma=.5$
& 75.28 & 91.12 & 94.08 & 94.32
& $+3.20$ (0.00196) & $+0.24$ (1.000)\\
Pred. $\sigma=1$
& 50.32 & 63.84 & 64.64 & 65.36
& $+1.52$ (0.497) & $+0.72$ (1.000)\\
Noise $2\times$
& 86.32 & 97.76 & 99.12 & 99.20
& $+1.44$ (0.0276) & $+0.08$ (1.000)\\
Noise $3\times$
& 73.84 & 92.08 & 94.24 & 94.72
& $+2.64$ (0.00800) & $+0.48$ (1.000)\\
\bottomrule
\end{tabular}

\vspace{2pt}
\begin{minipage}{0.96\linewidth}
\footnotesize
$^{\dagger}$Only 4/25 paired blocks have nonzero differences; the exact
sign-test gives $p=0.125$. The near-ceiling design therefore has low power for
small between-policy differences.
\end{minipage}
\end{table}

The structured-error sweep examines forecast errors beyond IID prediction noise. MECHVAR has the highest mean accuracy under probe-shared bias and low-rank correlated error, while EIG is slightly higher under mechanism-shared bias and heavy-tailed errors. These are exploratory numerical comparisons; Appendix~\ref{app:structured} reports the full results.

\subsection{Budget, Scale, and Acquisition Cost}
\label{sec:scale}

With $\sigma_{\mathrm{pred}}=0.5$, MECHVAR's mechanism-identification accuracy rises from 61.0\% at $B=1$ to 94.1\% at $B=4$ and 96.7\% at $B=8$. At $B=4$, its 94.1\% accuracy compares with 90.8\% for confirmation-first (paired Wilcoxon $p=0.00134$) and 94.4\% for EIG. Under $3\times$ likelihood noise, MECHVAR reaches 58.3\%, 91.9\%, and 95.9\% at the same budgets. In these sweeps, informed policies become more similar as the budget grows; the disadvantage of confirmation-first is more apparent at low-to-moderate budgets.

We next vary the number of mechanisms with $E=2K$ and $B=4$. At $K=12$, $E=24$, and $\sigma_{\mathrm{pred}}=0.5$, accuracy is 93.6\% for MECHVAR, 93.3\% for EIG, 86.0\% for confirmation-first, and 52.3\% for random. The exploratory comparison gives a 7.6-point MECHVAR advantage over confirmation-first ($p=1.84\times10^{-4}$); its difference from EIG remains unresolved.

We measure acquisition cost with one Python process and one BLAS thread on an AMD EPYC 9V74 virtual CPU, using Python 3.13.5 and NumPy 2.3.5 (Appendix~\ref{app:runtime}). At $K=5,E=10$, median full-library scoring takes $2.90\,\mu$s for MECHVAR and $129.41\,\mu$s for six-node EIG. At $K=100,E=200$, the medians are $10.36\,\mu$s and $57.69$ ms, respectively, a $5{,}568\times$ ratio on this machine. These timings cover score computation only and exclude model fitting and probe execution.

\begin{figure}[t]
\centering
\includegraphics[width=0.49\linewidth]{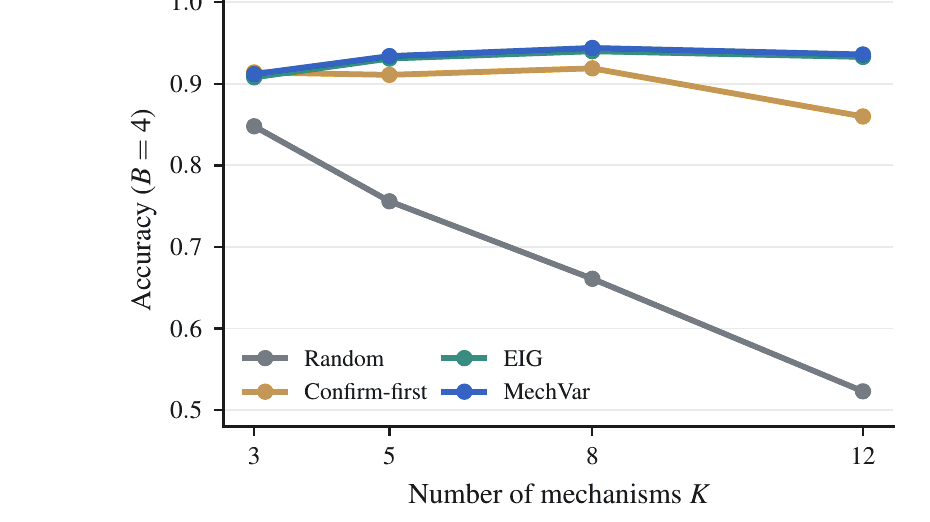}\hfill
\includegraphics[width=0.48\linewidth]{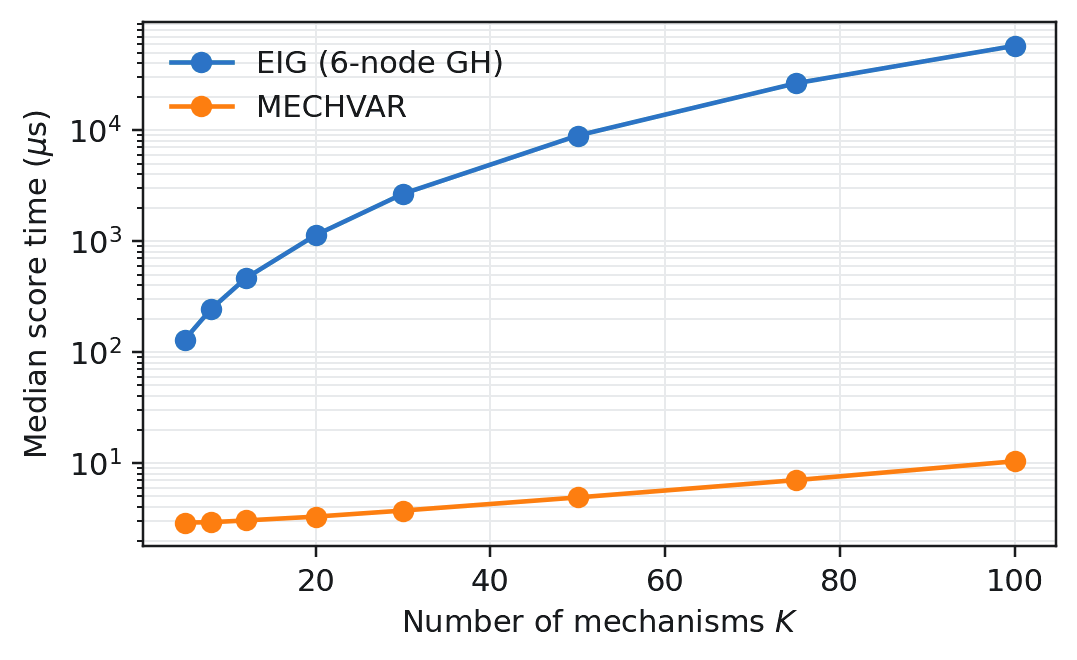}
\caption{Left: mechanism-set scaling at fixed $B=4$. Right: freshly rerun
single-thread acquisition-score timing with cached six-node Gauss--Hermite
nodes. Runtime is environment-specific; the widening gap reflects the different
online scoring costs as $K$ grows.}
\label{fig:scale}
\end{figure}

\subsection{Held-Out Digits-Shortcut Validation}
\label{sec:digits}

To complement the synthetic audit, we evaluate an executable closed loop on scikit-learn Digits \citep{pedregosa2011}. Six hidden mechanism classes vary the strength of the digit signal and two planted binary shortcuts. Seven diagnostic interventions flip, randomize, or remove either shortcut, or permute the original image features at test time. We estimate response templates from 24 calibration seeds and evaluate on 25 disjoint held-out seeds, each containing all six mechanism classes. An earlier pilot seed range is excluded from inferential analyses.

Each task starts with a native logistic-regression fit. The policy then selects an unspent intervention, and the executor performs or retrieves its measured fit. Only the selected intervention's held-out accuracy is revealed before updating the posterior. Score-greedy uses the same posterior and calibration templates as the discrimination policies but chooses the intervention with the highest posterior-expected benchmark accuracy.

Over budgets $0$--$4$, normalized mechanism-identification AUC is 0.8975 for MECHVAR and 0.7825 for score-greedy. The difference of $+0.1150$ has Wilcoxon $p=8.66\times10^{-5}$ and Holm-adjusted $p=2.60\times10^{-4}$. MECHVAR exceeds confirmation-first by $+0.0217$ (Holm-adjusted $p=0.0162$) and random by $+0.0325$ (Holm-adjusted $p=8.00\times10^{-5}$). EIG is numerically higher than MECHVAR by 0.0117, with the paired difference unresolved ($p=0.0701$). At $B=1$, mechanism-identification accuracy is 0.8867 for MECHVAR and 0.6200 for score-greedy; MECHVAR and EIG both reach 1.0000 by $B=2$.

We also run a leave-one-seed-out replay on five measured intervention-response tables. Macro normalized AUC is 0.6604 for MECHVAR, 0.6601 for EIG, 0.6586 for confirmation-first, and 0.6260 for random. Because the profile labels and intervention library are fixed before replay, this experiment is a diagnostic of identification within that library. Appendix~\ref{app:replay} gives the protocol and per-study results.

\begin{figure}[t]
\centering
\includegraphics[width=0.88\linewidth]{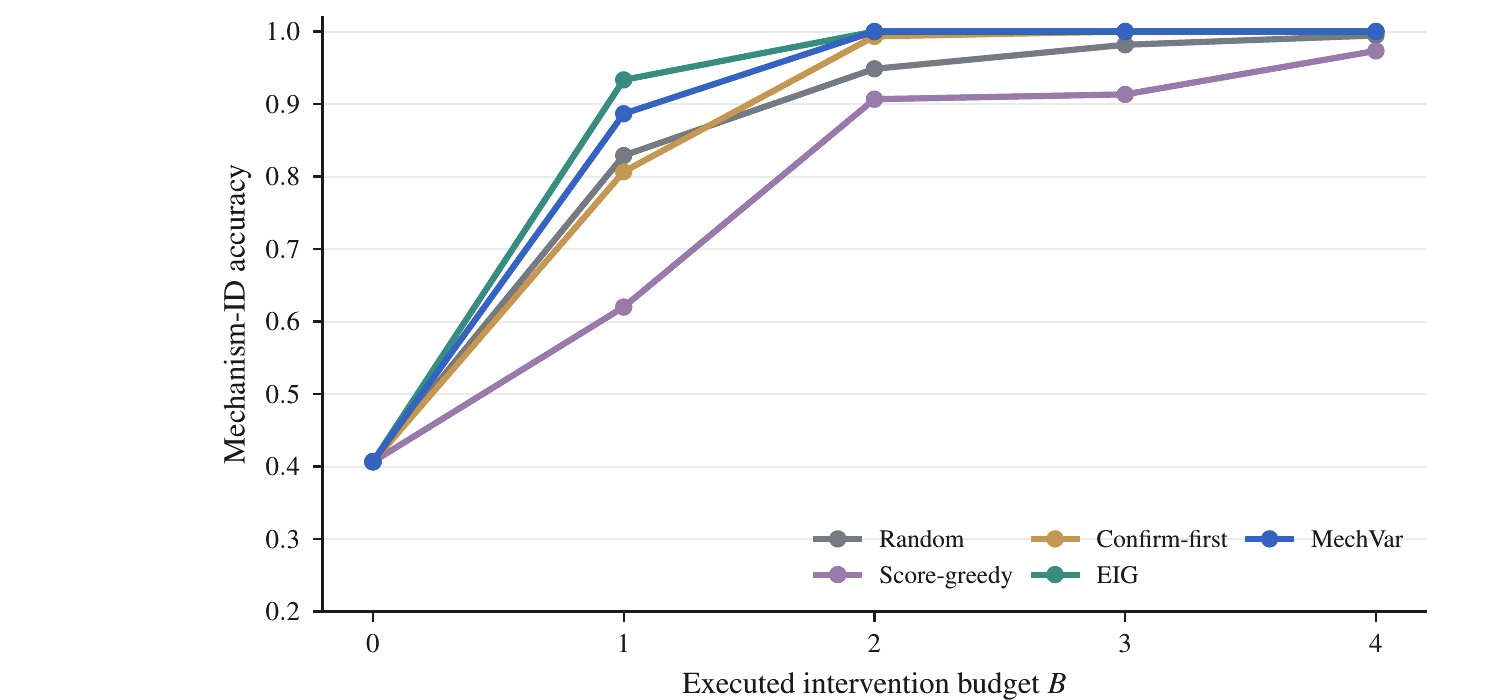}
\caption{Held-out Digits-Shortcut loop. Score-greedy optimizes expected
benchmark accuracy yet identifies the hidden mechanism more slowly; MECHVAR
and EIG both reach perfect mechanism identification by budget $B=2$.}
\label{fig:digits}
\end{figure}

\section{Discussion}
\label{sec:discussion}

In Digits, score-greedy uses the same posterior and templates as the discrimination policies but selects for expected accuracy, identifying the planted mechanism more slowly. Confirmation-first is less effective in the stress audit for another reason: concentrating probes on the posterior favorite can waste a small budget when forecasts or likelihood assumptions are inaccurate.

MECHVAR and EIG are numerically close in the tested settings, which is compatible with the shared-Gaussian Box--Hill identity (Proposition~\ref{prop:shared-gaussian}) and the local EIG expansion (Proposition~\ref{prop:eig-local}). The primary audit comparisons do not statistically distinguish them, and EIG has a small numerical AUC advantage on Digits; these observations do not establish equivalence or superiority in either direction. The practical distinction evaluated here is computational and diagnostic: MECHVAR uses matrix operations and exposes pairwise score contributions, whereas the EIG baseline evaluates a Gaussian-mixture integral for every candidate probe.

\subsection{Limitations}
\label{sec:limitations}

The exact Box--Hill reduction depends on a shared predictive variance. In the unequal-scale check (Appendix~\ref{app:numerical-diagnostics}), MECHVAR and heteroscedastic Box--Hill select the same top probe in 81.2\% of states at log-scale dispersion $\delta=0.10$, falling to 25.7\% at $\delta=0.50$. The scalar response model is another restriction: multi-metric outcomes require an explicit multivariate likelihood or scalarization.

The planner also treats the finite mechanism set and response templates as given. It does not represent uncertainty in template estimation, recover from a missing true mechanism, generate new hypotheses, or optimize a non-myopic multi-step design. These are substantive boundaries for deployment in open-ended research agents.

The EIG comparator uses a numerical approximation. In a 1,000-state convergence diagnostic, six-node Gauss--Hermite quadrature matches the 50-node top probe 99.7\% of the time, with a 95th-percentile per-state maximum score error of $7.99\times10^{-4}$ nats. This synthetic check does not cover every posterior state encountered in the main experiments.

Finally, the stress audit is synthetic, Digits uses planted mechanisms, and the replay uses an existing intervention library. The evidence therefore supports finite-library discrimination rather than open-world scientific discovery. Runtime measurements cover score computation in one environment (Appendix~\ref{app:runtime}); end-to-end savings depend on the cost of fitting models and executing probes.

\section{Conclusion}
\label{sec:conclusion}

MECHVAR addresses the problem of selecting the available experiment that best separates the mechanisms that remain plausible. Under a shared Gaussian predictive scale, the classical posterior-weighted Box--Hill objective reduces to a response-variance score with the same probe ordering, an additive mechanism-pair audit, and $O(KE)$ full-library evaluation. Across the stress audit and held-out Digits loop, MECHVAR is competitive with quadrature EIG while requiring much cheaper score computation in the measured implementation; score-greedy, by contrast, optimizes benchmark performance rather than identification. The result is a simple acquisition rule for settings in which the candidate mechanisms, probes, and shared-scale approximation are credible.

\section*{Reproducibility Statement}
Section~\ref{sec:experiments} specifies the evaluation design and inferential unit; Appendix~\ref{app:digitsdetails} records the disjoint Digits calibration and held-out seed ranges; and Section~\ref{sec:method} with Appendix~\ref{app:theory} states the assumptions and proofs. The accompanying arXiv source archive contains the manuscript source and all figures required to compile the paper. The numerical diagnostics and runtime measurements reported in the manuscript are documented in the corresponding experimental and appendix sections.

\section*{AI Use Statement}
Generative AI tools were used for language editing, code organization, and
implementation support. The author reviewed AI-assisted material and verified
the experimental definitions, numerical results, statistical analyses, and
manuscript claims against the underlying experimental records. The author takes
responsibility for the final content of this work, including text, claims, and
artifacts produced with generative-AI assistance.

\appendix

\section{Local EIG Expansion}
\label{app:theory}

\begin{proof}[Proof of Proposition~\ref{prop:eig-local}]
Fix a probe $e$ and define a discrete random variable $X$ by
$X=a_h(e)$ with probability $p_t(h)$. Then $\mathbb{E}[X]=0$ and
$\operatorname{Var}(X)=\MDV_t(e)$ by construction. Under the locally scaled
predictive family,
\[
Y_\lambda=\mu_t(e)+\lambda X+\varepsilon,
\qquad
\varepsilon\sim\mathcal{N}(0,\sigma_b^2).
\]
Centering by $\mu_t(e)$ and dividing by $\sigma_b$ gives the equivalent
scalar Gaussian channel
\[
Y' = \frac{Y_\lambda-\mu_t(e)}{\sigma_b} = \sqrt{\gamma}\,X+Z,
\qquad
\gamma=\frac{\lambda^2}{\sigma_b^2},
\qquad
Z\sim\mathcal{N}(0,1).
\]
Because $X$ is a deterministic function of $H$ and $Y'$ depends on $H$ only
through $X$, we have $I(H;Y_\lambda\mid e)=I(X;Y')$. The I--MMSE identity
\citep{guo2005immse} states that
\[
\frac{d}{d\gamma} I(X;\sqrt{\gamma}X+Z)
= \frac12\,\operatorname{mmse}(X\mid \sqrt{\gamma}X+Z).
\]
At $\gamma=0$, the observation is independent of $X$, so the MMSE equals the
prior variance:
\[
\operatorname{mmse}(X\mid Z)=\operatorname{Var}(X)=\MDV_t(e).
\]
Since $X$ has finite support, the MMSE is smooth near $\gamma=0$; hence
\[
\operatorname{mmse}(X\mid \sqrt{\gamma}X+Z)=\MDV_t(e)+O(\gamma).
\]
Integrating from $0$ to $\gamma$ yields
\[
I(X;Y')
= \int_0^{\gamma} \frac12\operatorname{mmse}(X\mid \sqrt{s}X+Z)\,ds
= \frac{\gamma}{2}\MDV_t(e)+O(\gamma^2).
\]
Substituting $\gamma=\lambda^2/\sigma_b^2$ gives
\[
I(H;Y_\lambda\mid e)
= \frac{\lambda^2}{2\sigma_b^2}\MDV_t(e)+O(\lambda^4),
\]
which is Equation~\eqref{eq:eig-local}. Therefore, if two probes have a nonzero
MDV gap, the leading $O(\lambda^2)$ term fixes the ordering for all
sufficiently small $\lambda$.
\end{proof}

\section{Worked Pairwise Audit Example}
\label{app:audit-example}

The following calculation is an illustrative audit example, separate from the empirical evaluation. Let the current posterior over three mechanisms be
$p=(0.50,0.30,0.20)$ and let a selected probe have predicted responses
$m=(0.15,0.75,0.40)$. Then $\mu=0.38$ and $\MDV=0.0676$. The pairwise
decomposition is:
\begin{table}[H]
\centering
\caption{Illustrative audit of one probe. The three pair contributions sum
exactly to the MDV score.}
\label{tab:audit-example}
\small
\begin{tabular}{lrr}
\toprule
Mechanism pair & $c_{hh'}(e)$ & Share of MDV \\
\midrule
$(h_1,h_2)$ & 0.05400 & 79.9\% \\
$(h_1,h_3)$ & 0.00625 & 9.2\% \\
$(h_2,h_3)$ & 0.00735 & 10.9\% \\
\bottomrule
\end{tabular}
\end{table}
Thus the audit identifies $(h_1,h_2)$ as the pair responsible for roughly four
fifths of the score, making the reason for selecting the probe easy to inspect.

\section{Complete Diagnostic Case Summary}
\label{app:diagnostics}

\begin{figure}[H]
\centering
\includegraphics[width=0.98\linewidth]{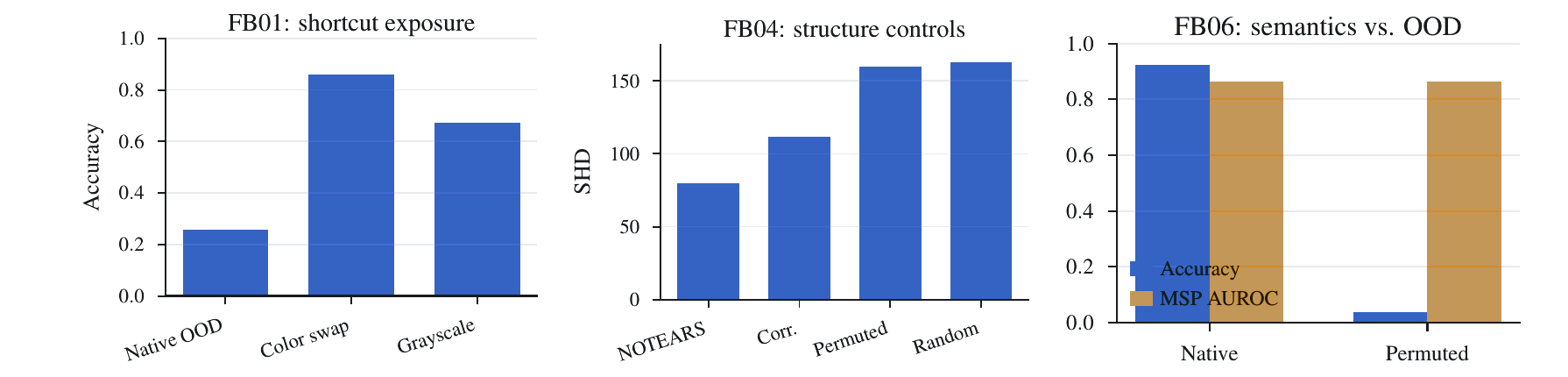}
\caption{Representative diagnostics. Left: shortcut exposure. Middle:
structural controls. Right: semantic class permutation leaves MSP OOD
discrimination nearly unchanged.}
\label{fig:diagnostics}
\end{figure}

The diagnostic implementations draw on the ColoredMNIST/DomainBed setup
\citep{gulrajani2021domainbed}, FixMatch \citep{sohn2020fixmatch},
DragonNet- and TARNet-style treatment-effect models
\citep{shi2019,shalit2017}, NOTEARS \citep{zheng2018notears}, and
maximum-softmax OOD scoring \citep{hendrycks2017baseline}. FB02 is a local
protocol-matched negative result and differs in protocol from the corresponding
external setup. FB03 distinguishes component-level evidence from aggregate model
ranking. FB05 is an InstaHide-inspired local proxy
\citep{huang2020instahide}; its conclusion is restricted to the attack families
evaluated here.

\begin{table}[H]
\centering
\caption{Six diagnostic studies used to motivate the mechanism-identification
objective. Planner performance is evaluated separately in the main experiments.}
\label{tab:diagnostics}
\small
\begin{tabularx}{\linewidth}{@{}>{\raggedright\arraybackslash}p{0.75cm}>{\raggedright\arraybackslash}p{2.2cm}>{\raggedright\arraybackslash}X>{\raggedright\arraybackslash}p{2.25cm}@{}}
\toprule
ID & Domain & Key diagnostic result & Interpretation\\
\midrule

FB01 & ColoredMNIST &
OOD accuracy $0.2575\rightarrow0.8612$ under color swap;
$\Delta=+0.6028$ (95\% CI $[0.5588,0.6469]$). &
Shortcut dependence supported\\

FB02 & CIFAR-100 SSL &
Supervised-200 $0.0493$ vs. FixMatch $0.0225$;
difference $-0.0268$ (95\% CI $[-0.0677,0.0140]$). &
No support under this protocol\\

FB03 & IHDP causal inference &
Targeted regularization improves DragonNet PEHE by $0.1346$, while mean PEHE
still favors TARNet ($1.2581$ vs. $1.3852$). &
Mixed component-level evidence\\

FB04 & 50-node causal discovery &
SHD: NOTEARS $79.7$; correlation $111.7$; permuted $160.0$; random $162.7$. &
Structural sensitivity supported\\

FB05 & Poisoned MNIST &
Harmonic clean/robust gain: noise-mix $+0.0122$
(95\% CI $[0.0027,0.0217]$); FGSM $+0.0475$; trigger gain approximately zero. &
Partial robustness support\\

FB06 & CIFAR-10$\rightarrow$SVHN OOD &
Accuracy $0.9227\rightarrow0.0385$ under class permutation while
$|\Delta\mathrm{MSP\text{-}AUROC}|\le8.62\times10^{-7}$. &
Semantic/OOD dissociation\\

\bottomrule
\end{tabularx}
\end{table}

\section{Robustness and Statistical Details}
\label{app:robustness}

\subsection{Structured Forecast Misspecification}
\label{app:structured}

To complement the IID prediction-noise sweep in the main text, we evaluate five
structured forecast-error models at scale $0.5$ with $K=5$, $E=10$, and $B=4$.

\begin{figure}[H]
\centering
\includegraphics[width=0.78\linewidth]{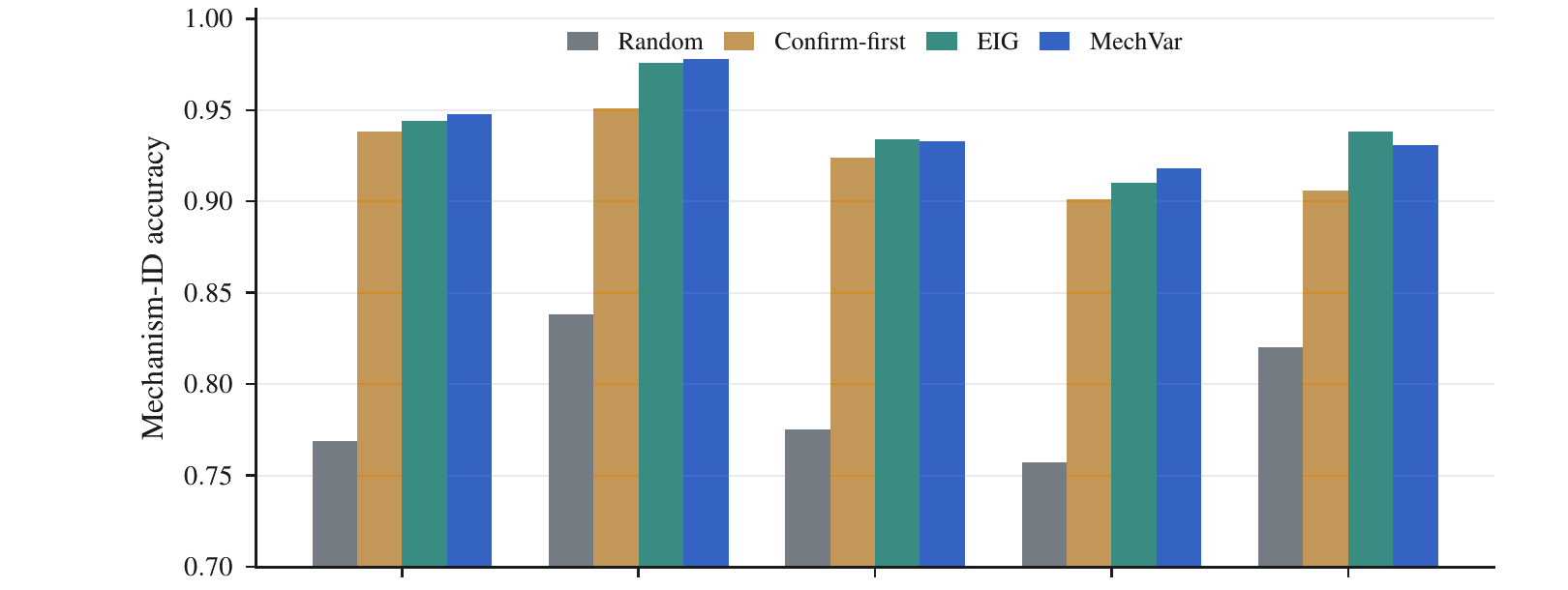}
\caption{Mechanism-identification accuracy under five structured forecast-error
models at scale $0.5$, with $K=5$, $E=10$, and $B=4$.}
\label{fig:structured}
\end{figure}

\begin{table}[H]
\centering
\caption{Mean mechanism-identification accuracy across 20 paired blocks under
five structured forecast-misspecification models. This sweep is exploratory;
no confirmatory pairwise inference is attached to the numerical ordering.}
\label{tab:structured}
\small
\begin{tabular}{lcccc}
\toprule
Error structure & Random & Confirm-first & EIG & MECHVAR\\
\midrule
IID Gaussian & .769 & .938 & .944 & .948\\
Probe-shared bias & .838 & .951 & .976 & .978\\
Mechanism-shared bias & .775 & .924 & .934 & .933\\
Low-rank correlated & .757 & .901 & .910 & .918\\
Heavy-tail $t_3$ & .820 & .906 & .938 & .931\\
\bottomrule
\end{tabular}
\end{table}

The ordering is not uniform across error geometries: MECHVAR is highest under
IID Gaussian, probe-shared bias, and low-rank correlated error, whereas EIG is
slightly higher under mechanism-shared bias and heavy-tailed error.

\subsection{Multiplicity Control}
\label{app:multiplicity}

For the robust audit, the five prediction-misspecification levels and four
non-duplicate likelihood-misspecification levels form a nine-condition family
for each comparator. Holm-adjusted Wilcoxon $p$-values for MECHVAR versus
confirmation-first are $0.228$, $0.228$, $0.497$, $0.00196$, and $0.497$ for
$\sigma_{\mathrm{pred}}=0,0.1,0.25,0.5,1.0$, respectively, and $0.497$,
$0.152$, $0.0276$, and $0.00800$ for true-noise multipliers
$0.5,1.5,2,3$, respectively. All corresponding Holm-adjusted $p$-values for
the MECHVAR--EIG family equal $1.0$.

\section{Numerical Diagnostics for EIG and Model Assumptions}
\label{app:numerical-diagnostics}

\subsection{Gauss--Hermite Convergence}
We evaluated quadrature order on 1,000 independently generated planner states
with $K=5$, $E=10$, $\sigma_b=0.22$, posterior draws
$p\sim\mathrm{Dirichlet}(2,\ldots,2)$, and response templates sampled uniformly
from $[0.15,0.85]$. For every state we computed EIG with 6, 20, and 50
Gauss--Hermite nodes; the 50-node result serves as the numerical reference.
Table~\ref{tab:gh-convergence} reports probe-ranking agreement and the maximum
absolute score error within each state.

\begin{table}[H]
\centering
\caption{Gauss--Hermite convergence over 1,000 synthetic planner states.
Score errors are in nats and are measured against 50-node quadrature.}
\label{tab:gh-convergence}
\small
\begin{tabular}{rcccc}
\toprule
Nodes & Top-probe agreement & Median Spearman & Median max error & 95th-pct. max error\\
\midrule
6  & 0.997 & 1.000 & $3.84\times10^{-4}$ & $7.99\times10^{-4}$\\
20 & 1.000 & 1.000 & $3.62\times10^{-7}$ & $3.16\times10^{-6}$\\
\bottomrule
\end{tabular}
\end{table}
The largest six-node score discrepancy observed in this diagnostic was
$1.99\times10^{-3}$ nats. The 20-node rule matched the 50-node top probe in all
1,000 states.

\subsection{Heteroscedastic Sensitivity}
To quantify the shared-scale boundary, we generated 2,000 planner states for
each dispersion level using the same $K$, $E$, posterior, and response-template
sampling scheme. Mechanism--probe scales were drawn as
\[
\sigma_h(e)=0.22\exp(\delta z_{he}),\qquad z_{he}\sim\mathcal N(0,1),
\]
with values clipped to $[0.05,0.80]$. We compare the MECHVAR ranking, which
uses the response means, with the exact posterior-weighted ordered-pair KL
score for heteroscedastic Gaussians. At $\delta=0$ the rankings coincide by
Proposition~\ref{prop:shared-gaussian}; disagreement grows as scale variation
increases.

\begin{table}[H]
\centering
\caption{Synthetic sensitivity to mechanism--probe-dependent predictive
scales. Relative BH loss is the fractional gap between the optimal
heteroscedastic Box--Hill score and the score attained by the MECHVAR-selected
probe.}
\label{tab:hetero}
\small
\begin{tabular}{rcccc}
\toprule
$\delta$ & Top-probe agreement & Median Spearman & Median BH loss & 95th-pct. BH loss\\
\midrule
0.00 & 1.000 & 1.000 & 0.000 & 0.000\\
0.10 & 0.812 & 0.976 & 0.000 & 0.134\\
0.25 & 0.560 & 0.855 & 0.000 & 0.425\\
0.50 & 0.257 & 0.539 & 0.359 & 0.796\\
\bottomrule
\end{tabular}
\end{table}
These numbers make the model boundary concrete: mild scale variation often
preserves the broad ranking, while strong heteroscedasticity can change the
selected experiment materially.

\section{Extended Score-Runtime Benchmark}
\label{app:runtime}

We reran the score-only benchmark on an AMD EPYC 9V74 80-Core Processor exposed
through a KVM virtual machine (five single-thread vCPUs visible to the container),
using Python 3.13.5, NumPy 2.3.5, and SciPy 1.17.0. One Python process was used and \texttt{OMP\_NUM\_THREADS},
\texttt{OPENBLAS\_NUM\_THREADS}, \texttt{MKL\_NUM\_THREADS}, and
\texttt{NUMEXPR\_NUM\_THREADS} were each fixed to one. Six-node
Gauss--Hermite nodes and weights were cached outside the timed region. The EIG
routine loops over the latent mechanism while vectorizing over probes,
quadrature nodes, and mixture components; no JIT compiler or GPU is used. For each
$K$, we generated five independent posterior--template instances with $E=2K$;
reported values are medians of the per-instance median call times after warm-up.

\begin{table}[H]
\centering
\caption{Single-thread acquisition-score timing. EIG uses six-node
Gauss--Hermite quadrature; values exclude model fitting and probe execution.}
\label{tab:runtime-full}
\small
\begin{tabular}{rrrrr}
\toprule
$K$ & $E$ & MECHVAR ($\mu$s) & EIG ($\mu$s) & EIG/MECHVAR\\
\midrule
5   & 10  & 2.90 & 129.41 & 44.6$\times$\\
8   & 16  & 2.92 & 243.28 & 83.2$\times$\\
12  & 24  & 3.03 & 463.51 & 152.8$\times$\\
20  & 40  & 3.29 & 1134.53 & 345.4$\times$\\
30  & 60  & 3.73 & 2674.41 & 718.0$\times$\\
50  & 100 & 4.89 & 8958.05 & 1831.0$\times$\\
75  & 150 & 7.01 & 26527.21 & 3784.2$\times$\\
100 & 200 & 10.36 & 57687.03 & 5568.0$\times$\\
\bottomrule
\end{tabular}
\end{table}
The benchmark records the software environment and the timing protocol used for
this table and Figure~\ref{fig:scale}. The ratio is a wall-clock measurement of
these implementations on this machine, not a hardware-independent complexity
constant.

\section{Online Digits-Shortcut Details}
\label{app:digitsdetails}

The final six-mechanism evaluation uses held-out seeds 3100--3124, while
response templates are estimated exclusively from calibration seeds 1100--1123.
The earlier four-mechanism pilot using seeds 2100--2124 is excluded from all
inferential results reported in the paper. A shared Gaussian belief scale of
$\sigma_b=0.04$ is estimated from calibration residuals with a conservative
floor. Each evaluation block contains all six mechanism classes, and
random-policy results are averaged over 100 trajectories.

\begin{table}[H]
\centering
\caption{Held-out Digits-Shortcut mechanism-identification accuracy across
budgets. Budget $0$ denotes the posterior after the native run and before any
diagnostic intervention is executed.}
\label{tab:digits-full}
\small
\begin{tabular}{lccccc}
\toprule
Policy & $B=0$ & $B=1$ & $B=2$ & $B=3$ & $B=4$\\
\midrule
Random & .4067 & .8290 & .9486 & .9817 & .9946\\
Score-greedy & .4067 & .6200 & .9067 & .9133 & .9733\\
Confirm-first & .4067 & .8067 & .9933 & 1.0000 & 1.0000\\
EIG & .4067 & .9333 & 1.0000 & 1.0000 & 1.0000\\
MECHVAR & .4067 & .8867 & 1.0000 & 1.0000 & 1.0000\\
\bottomrule
\end{tabular}
\end{table}

Normalized budget-AUC is 0.8650 for random, 0.7825 for score-greedy, 0.8758
for confirmation-first, 0.9092 for EIG, and 0.8975 for MECHVAR. Across the
25 paired held-out blocks, the MECHVAR AUC difference is $+0.0325$ relative
to random (Wilcoxon $p=2.0\times10^{-5}$; Holm-adjusted
$p=8.0\times10^{-5}$), $+0.1150$ relative to score-greedy
($p=8.66\times10^{-5}$; Holm-adjusted $p=2.60\times10^{-4}$),
$+0.0217$ relative to confirmation-first ($p=0.00811$; Holm-adjusted
$p=0.0162$), and $-0.0117$ relative to EIG ($p=0.0701$; statistically
unresolved).

\section{Retrospective Empirical Replay}
\label{app:replay}

\begin{figure}[H]
\centering
\includegraphics[width=0.72\linewidth]{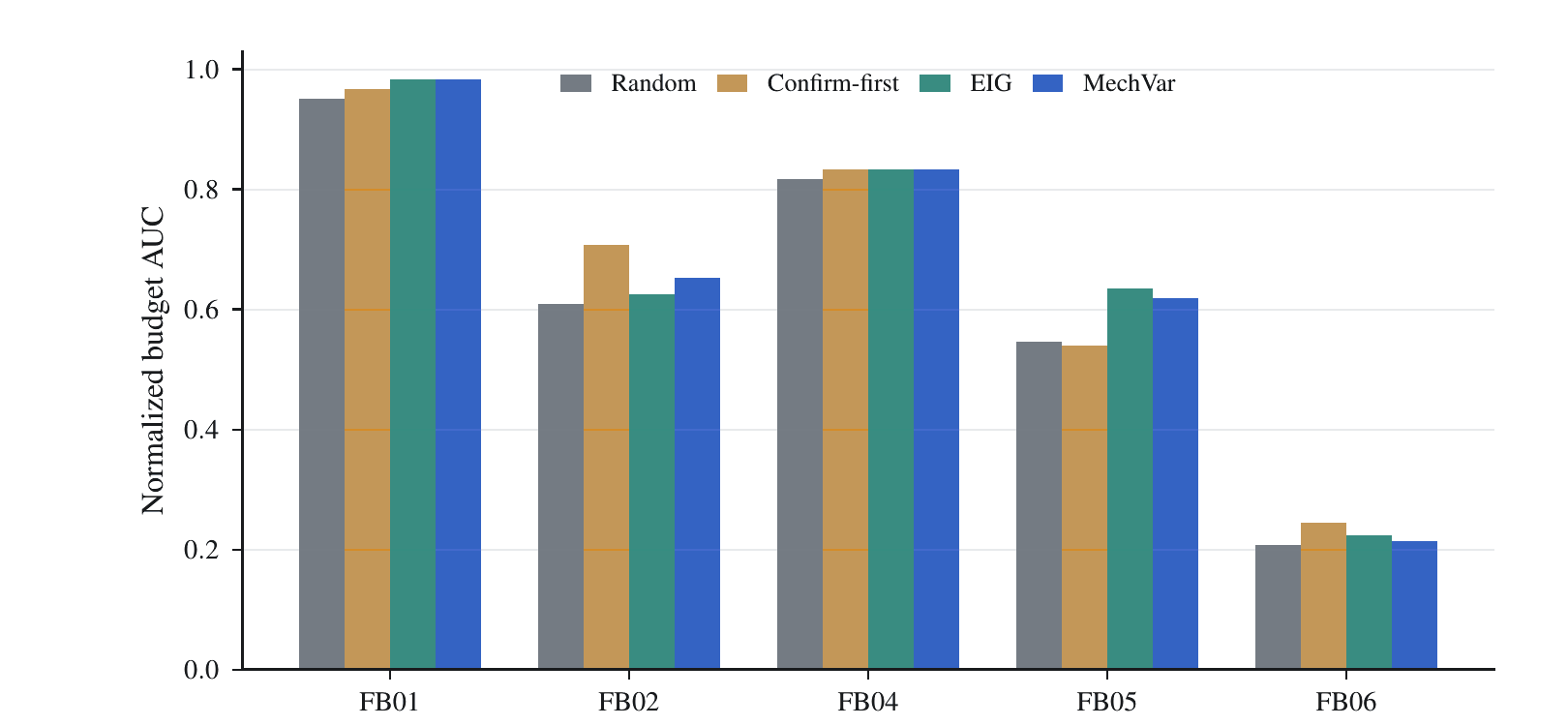}
\caption{Normalized budget-AUC for leave-one-seed-out identification of
empirical intervention-response profiles using context-derived priors.}
\label{fig:replay}
\end{figure}

Response profiles and standardization statistics are estimated in a
leave-one-seed-out manner, after which held-out response dimensions are revealed
sequentially. Macro normalized budget-AUC across FB01, FB02, FB04, FB05, and
FB06 is 0.6604 for MECHVAR, 0.6601 for EIG, 0.6586 for confirmation-first,
and 0.6260 for random. The intervention library and profile labels are fixed before replay; the experiment
therefore measures sequential identification within an existing library.


\begin{thebibliography}{99}
\small
\bibitem[Box and Hill(1967)]{boxhill1967} Box, G. E. P. and Hill, W. J. Discrimination among mechanistic models. \emph{Technometrics}, 9(1):57--71, 1967.
\bibitem[Chaloner and Verdinelli(1995)]{chaloner1995} Chaloner, K. and Verdinelli, I. Bayesian experimental design: A review. \emph{Statistical Science}, 10(3):273--304, 1995.
\bibitem[Chan et al.(2025)]{chan2025mle} Chan, J. S., Chowdhury, N., Jaffe, O., et al. MLE-bench: Evaluating machine learning agents on machine learning engineering. \emph{ICLR}, 2025.
\bibitem[Chen et al.(2025)]{chen2025scienceagent} Chen, Z., Chen, S., Ning, Y., et al. ScienceAgentBench: Toward rigorous assessment of language agents for data-driven scientific discovery. \emph{ICLR}, 2025.
\bibitem[Foster et al.(2021)]{foster2021dad} Foster, A., Ivanova, D. R., Malik, I., and Rainforth, T. Deep Adaptive Design: Amortizing Sequential Bayesian Experimental Design. In \emph{Proceedings of the 38th International Conference on Machine Learning}, PMLR 139:3384--3395, 2021.
\bibitem[Gottweis et al.(2025)]{gottweis2025cointist} Gottweis, J., et al. Towards an AI co-scientist. arXiv:2502.18864, 2025.
\bibitem[Gulrajani and Lopez-Paz(2021)]{gulrajani2021domainbed} Gulrajani, I. and Lopez-Paz, D. In search of lost domain generalization. \emph{ICLR}, 2021.
\bibitem[Guo et al.(2005)]{guo2005immse} Guo, D., Shamai, S., and Verd\'{u}, S. Mutual information and minimum mean-square error in Gaussian channels. \emph{IEEE Transactions on Information Theory}, 51(4):1261--1282, 2005.
\bibitem[Hendrycks and Gimpel(2017)]{hendrycks2017baseline} Hendrycks, D. and Gimpel, K. A baseline for detecting misclassified and out-of-distribution examples in neural networks. \emph{ICLR}, 2017.
\bibitem[Holm(1979)]{holm1979} Holm, S. A simple sequentially rejective multiple test procedure. \emph{Scandinavian Journal of Statistics}, 6(2):65--70, 1979.
\bibitem[Houlsby et al.(2011)]{houlsby2011} Houlsby, N., Huszar, F., Ghahramani, Z., and Lengyel, M. Bayesian active learning for classification and preference learning. arXiv:1112.5745, 2011.
\bibitem[Huang et al.(2020)]{huang2020instahide} Huang, Y., Song, Z., Li, K., and Arora, S. InstaHide: Instance-hiding schemes for private distributed learning. \emph{ICML}, 2020.
\bibitem[Huang et al.(2024)]{huang2024mlagent} Huang, Q., Vora, J., Liang, P., and Leskovec, J. MLAgentBench: Evaluating language agents on machine learning experimentation. \emph{ICML}, 2024.
\bibitem[Lindley(1956)]{lindley1956} Lindley, D. V. On a measure of the information provided by an experiment. \emph{Annals of Mathematical Statistics}, 27(4):986--1005, 1956.
\bibitem[Lu et al.(2024)]{lu2024aiscientist} Lu, C., Lu, C., Lange, R. T., Foerster, J., Clune, J., and Ha, D. The AI Scientist: Towards fully automated open-ended scientific discovery. arXiv:2408.06292, 2024.
\bibitem[Pedregosa et al.(2011)]{pedregosa2011} Pedregosa, F., et al. Scikit-learn: Machine learning in Python. \emph{Journal of Machine Learning Research}, 12:2825--2830, 2011.
\bibitem[Rainforth et al.(2024)]{rainforth2024modern} Rainforth, T., Foster, A., Ivanova, D. R., and Bickford Smith, F. Modern Bayesian experimental design. \emph{Statistical Science}, 39(1):100--114, 2024.
\bibitem[Schmidgall et al.(2025)]{schmidgall2025agentlab} Schmidgall, S., et al. Agent Laboratory: Using LLM agents as research assistants. \emph{Findings of EMNLP}, 2025.
\bibitem[Shalit et al.(2017)]{shalit2017} Shalit, U., Johansson, F. D., and Sontag, D. Estimating individual treatment effect: Generalization bounds and algorithms. \emph{ICML}, 2017.
\bibitem[Shi et al.(2019)]{shi2019} Shi, C., Blei, D. M., and Veitch, V. Adapting neural networks for the estimation of treatment effects. \emph{NeurIPS}, 2019.
\bibitem[Sohn et al.(2020)]{sohn2020fixmatch} Sohn, K., et al. FixMatch: Simplifying semi-supervised learning with consistency and confidence. \emph{NeurIPS}, 2020.
\bibitem[Wang et al.(2026)]{wang2026fire} Wang, Z., Bai, F., Luo, Z., et al. FIRE-Bench: Evaluating agents on the rediscovery of scientific insights. \emph{ICML}, 2026.
\bibitem[Yamada et al.(2025)]{yamada2025aiscientistv2} Yamada, Y., Lange, R. T., Lu, C., et al. The AI Scientist-v2: Workshop-level automated scientific discovery via agentic tree search. arXiv:2504.08066, 2025.
\bibitem[Zheng et al.(2018)]{zheng2018notears} Zheng, X., Aragam, B., Ravikumar, P., and Xing, E. P. DAGs with NO TEARS: Continuous optimization for structure learning. \emph{NeurIPS}, 2018.
\end{thebibliography}
\end{document}